\documentclass{article}

\usepackage{times}
\usepackage{fullpage}
\usepackage{booktabs} 
\usepackage{natbib}


\usepackage[utf8]{inputenc} 
\usepackage[T1]{fontenc}    
\usepackage{hyperref}       
\usepackage{url}            
\usepackage{booktabs}       
\usepackage{amsfonts}       
\usepackage{nicefrac}       
\usepackage{microtype}      
\usepackage{enumitem}
\usepackage{graphicx}

\usepackage{amsmath,amsfonts,amssymb,amsthm, color}
 \usepackage{algorithm,algorithmicx}
\usepackage{epsfig,wrapfig,epsf,subcaption}
\usepackage[outercaption]{sidecap} 
\usepackage{times}
\usepackage{fancyvrb}
\usepackage{enumerate}
\usepackage{tikz}
\usepackage{bm}
\usetikzlibrary{arrows,shapes}
\usetikzlibrary{patterns}
\usepackage{graphicx}
\usepackage{comment}
\usepackage{ltxtable}
\usepackage{nicefrac}
\usepackage{multirow}
\usepackage{setspace}
\usepackage[utf8]{inputenc} 
\usepackage[T1]{fontenc}    
\usepackage{url}            
\usepackage{booktabs}       
\usepackage{microtype}      
\usepackage{pbox}
\usepackage{xfrac}

\makeatletter
\newtheorem*{rep@theorem}{\rep@title}
\newcommand{\newreptheorem}[2]{%
\newenvironment{rep#1}[1]{%
 \def\rep@title{#2 \ref{##1}}%
 \begin{rep@theorem}}%
 {\end{rep@theorem}}}
\makeatother

\newtheorem{theorem}{Theorem}

\newreptheorem{theorem}{Theorem}
\newreptheorem{lemma}{Lemma}
\newreptheorem{definition}{Definition}

\theoremstyle{definition}

\newcounter{saveenumi}

\newcommand{\vecD}{\mathbf{D}}

\newcommand{\vecI}{\mathbf{I}}

\newcommand{\vecP}{\mathbf{P}}

\newcommand{\vecU}{\mathbf{U}}
\newcommand{\vecV}{\mathbf{V}}
\newcommand{\vecW}{\mathbf{W}}
\newcommand{\vecX}{\mathbf{X}}

\newcommand{\vecZ}{\mathbf{Z}}

\newcommand{\vece}{\mathbf{e}}

\newcommand{\vecv}{\mathbf{v}}

\newcommand{\removed}[1]{}

\newcommand{\norm}[1]{\left\lVert{#1}\right\rVert}

\newcommand{\R}{\mathbb{R}}

\newcommand{\m}[1]{\mathcal{#1}}

\newcommand{\reals}{\mathbb{R}}

\newcommand{\BL}{\text{BERT}_{\text{LARGE}}}

\newcommand{\CHANGES}[1]{{#1}}


\usepackage{color}

\setlist[itemize]{leftmargin=0.3in}
\setlist[enumerate]{leftmargin=0.3in}

\title{Low-Rank Bottleneck in Multi-head Attention Models}

\author{
Srinadh Bhojanapalli \thanks{
Google Research NY. \texttt{bsrinadh@google.com}}
\and
Chulhee Yun\thanks{ MIT. Based on work performed at Google Research New York. \texttt{chulheey@mit.edu} }
\and
Ankit Singh Rawat\thanks{Google Research NY. \texttt{ankitsrawat@google.com}}
\and
Sashank J. Reddi\thanks{Google Research NY. \texttt{sashank@google.com}}
\and
Sanjiv Kumar\thanks{Google Research NY. \texttt{sanjivk@google.com}}
}

\date{}
\begin{document}

\maketitle

\begin{abstract}
Attention based Transformer architecture has enabled significant advances in the field of natural language processing. In addition to new pre-training techniques, recent improvements crucially rely on working with a relatively larger embedding dimension for tokens. Unfortunately, this leads to models that are prohibitively large to be employed in the downstream tasks. In this paper we identify one of the important factors contributing to the large embedding size requirement. In particular, our analysis highlights that the scaling between the number of heads and the size of each head in the current architecture gives rise to a low-rank bottleneck in attention heads, causing this limitation. We further validate this in our experiments. As a solution we propose to set the head size of an attention unit to input sequence length, and independent of the number of heads, resulting in multi-head attention layers with provably more expressive power. We empirically show that this allows us to train models with a relatively smaller embedding dimension and with better performance scaling.

\end{abstract}

\section{Introduction}
Attention based architectures, such as Transformers, have been effective for sequence modelling tasks such as machine translation \citep{gehring2017convolutional, vaswani2017attention}, question answering, sentence classification \citep{radford2018improving,devlin2018bert} and document generation \citep{liu2018generating}. These models have emerged as better alternatives to the recurrent models - RNNs \citep{sutskever2014sequence}, LSTMs \citep{hochreiter1997long} and GRUs \citep{cho2014properties}. This is mainly due to their feed forward structure, which removes the sequential processing bottleneck for sequence data, making them easier to train compared to the recurrent models. Self attention models also have found applications in vision \citep{wang2018non}, adversarial networks \citep{zhang2018self}, reinforcement learning \citep{zambaldi2018relational, li2017deep} and speech recognition \citep{chiu2018state}.

Recent advances in using the self attention models in natural language tasks have been made by first using a language modeling task to pre-train the models and then fine tuning the learned models on specific downstream tasks. \citet{radford2018improving} and \citet{devlin2018bert} used Transformers to pre-train a language model and showed that the fine tuned model outperforms LSTMs on many natural language understanding and question answering tasks. For example, BERT \citep{devlin2018bert}, a 24 layer transformer model, is shown to achieve the state of the art performance on several NLP tasks, including on the SQuAD dataset. These advances, in addition to novel pre-training tasks, relied on bigger models with a larger embedding size. BERT model uses an embedding size of 1024 \citep{devlin2018bert}; GPT-2 uses models with embedding size up to 1600 \citep{radford2019language}.

A single Transformer block consists of two key components: a multi-head self attention layer followed by a feed forward layer \citep{vaswani2017attention}. A single head in a multi-head attention layer, computes self attention between the tokens in the input sequence, which it then uses to compute a weighted average of embeddings for each token. 
\CHANGES{Each head projects the data into a lower dimensional subspace, and computes the self attention in this subspace. This projection size for each head is commonly referred to as the {\em head size}.

To keep the number of parameters fixed in the attention layer regardless of the number of heads, the prevalent heuristic is to scale the head size with 1/(number of heads). 
This heuristic was initially proposed in \citet{vaswani2017attention} and has become a de facto standard heuristic in multi-head attention models \citep{radford2018improving,devlin2018bert}.} However, increasing the number of heads decreases the head size, decreasing the expressive power of individual heads. We prove that reducing the head size to a value below the input sequence length harms the representation power of each head (see Theorem \ref{thm:representation}). This is because a smaller head size introduces a rank constraint on the projection matrices in each head, and limits their representation power. We indeed notice this effect in practice: while the performance improves with increasing the number of heads in the beginning \citep{devlin2018bert}, we notice a drop in the performance once the number of heads increases beyond a certain threshold, as seen in Table~\ref{table:1} and Fig.~\ref{fig:lm1b} (see also Table~4(A) in \citet{vaswani2017attention}). 

In order to avoid hurting the performance, the existing models allow for multiple heads by increasing the embedding size, which in turn increases the head size. However, larger embedding size, in addition to increasing the number of parameters, makes it expensive to use the model and the learned embeddings in downstream tasks, as the downstream model sizes scale with the embedding size of the tokens. For example, the inference time and memory required in retrieval tasks typically increases linearly with the embedding size.

\begin{table}[!t]
\centering
\begin{small}
\begin{tabular}{|c| c c c|} 
 \hline
 \# heads & 8 & 16 & 32 \\ [0.5ex] 
 \hline
  \# params & 336M  & 336M & 336M \\ [0.5ex] 
 \hline
 SQuAD - F1 & 90.89$\pm$0.15 & 90.61$\pm$0.14 & 90.45$\pm$0.08 \\ [0.5ex]
 SQuAD - EM & 84.1$\pm$0.34 &83.75$\pm$0.27 & 83.48$\pm$0.13 \\[0.5ex]
 MNLI & 85$\pm$0.2 & 84.5$\pm$0.4 & 84.4$\pm$0.2 \\ [1ex] 
 \hline
\end{tabular}
\vspace{1ex}
\caption{Performance of $\BL$ \citep{devlin2018bert}, a 24 layer Transformer with an embedding size of 1024, suffers with the increasing number of heads after 8 heads.}
\label{table:1}
\end{small}
\end{table}

\CHANGES{In this paper we propose setting the head size of attention units to input sequence length. While this is a simple hyper-parameter change in the Transformer architecture, we show that it is important to set this value appropriately to avoid the low-rank bottleneck (see Theorem \ref{thm:representation}), and to improve the representation power (see Theorem \ref{thm:comparison}). This fixed head size is also independent of both the number of heads and the embedding size of the model. This allows us to train models with a relatively smaller embedding size (hence fewer parameters) without affecting the head size. Another advantage of the fixed head size is that unlike the standard setting which requires the number of heads to be a factor of the embedding size, we are free to set an arbitrary number of heads as required for the task.}

Interestingly, we note that this simple yet novel approach of fixing the head size in multi-head Transformers results in empirically superior performance. We evaluate Transformers trained with this fixed head size on language modeling (LM1B dataset), natural language inference (MNLI dataset) and question answering tasks (SQuAD dataset). We show that fixing the head size allows us to train Transformers with a better performance scaling and smaller embedding size. We show that with the fixed head size Transformers trained with an embedding size of 512 can match the performance of the $\BL$\citep{devlin2018bert}, a Transformer with an embedding size of 1024 (see Fig. \ref{fig:bert}). We further present experimental results evaluating the effect of different choices of the head size and the embedding size in Section \ref{sec:experiments}.

\CHANGES{Our contributions in this paper lie in identifying and rigorously proving the low rank bottleneck in multi-head attention models, and showing that fixing the head size to input sequence length results in a strictly better model, both theoretically and empirically. The contributions of this paper are summarized below.}
\begin{list}{\textbullet}{\leftmargin=1.1em \itemindent=0em \itemsep=1pt}
    \item We analyze the representation power of the multi-head self attention layer and prove the low-rank bottleneck the head size places on the attention units (Theorem \ref{thm:representation}).
    \item \CHANGES{We propose to set the head size to input sequence length, and show that fixing the head size strictly improves the expressive power of the multi-head attention layers compared to the standard heuristic for setting the head size (Theorem \ref{thm:comparison}). This allows us to both increase the number of heads per layer and decrease the embedding size, without hurting the performance.  We develop a novel construction based approach to prove this result, which can potentially be useful in analyzing other variants of the Transformer architecture.}
    \item We experimentally show that with a fixed head size, Transformers can be trained with better performance scaling and a smaller embedding size on three standard NLP tasks.
\end{list}

 \subsection{Related Works}
Given the significance of self attention models, there has been work trying to both improve the performance and speed up the computation in Transformers. \citet{ott2018scaling} and \citet{you2019reducing} reduce precision and use large batch training to reduce the training time of the attention models. \citet{child2019generating} propose sparse self attention models to speed up the computation in the attention layer for long sequence data generation tasks. They show that these sparse attention models can be trained on tasks with sequence length greater than 10k without sacrificing the accuracy. \citet{dehghani2018universal} propose a depth recurrent Transformer network that reuses the parameters across layers. They show that this modification makes the Transformer networks Turing complete even with finite precision weights. \citet{yang2019xlnet} propose a new way to increase the effective sequence length that the Transformer attends to, by reusing the intermediate embeddings across sequences. They show that the modified architecture performs better on tasks that require computing context over longer sequence lengths. We note that most of these modifications rely on the multi-head self attention, the same building block of the Transformers. Our work is studying this basic multi-head attention layer, and suggesting a new way to set the head size, which can be easily applied along with any of the above architectural modifications. 

\citet{wu2019pay} propose to replace the self-attention layer with lightweight dynamic convolutions and show improved performance on machine translation and language modeling. Even though the resulting model has faster inference time, it still needs to use a large embedding size (1024), as big as the original attention models. We believe the techniques in this paper can be combined with these results to realize both smaller embedding size and faster inference time.

\citet{sun2019token} perform neural architecture search using evolutionary methods on sequence to sequence models and find an evolved transformer architecture, which in addition to multi-head attention units, has convolution filter and gated linear units. Our proposed modifications stay closer to Transformers in spirit and can be used as seed units for this architecture search.

\citet{yang2017breaking} have studied the effect of rank constraint caused by the small projection sizes in computing the softmax loss. The situation in self attention layers is a bit different. While the expressive power of each head reduces with the decreasing head size, at the same time we are increasing the number of heads, which can potentially negate this and increase the overall capacity of the layer. As we show in Theorem~\ref{thm:comparison}, the prevalent head size heuristic indeed limits the expressive power of the multi-head attention layer.

\citet{yun2019transformers} studied the representation power of Transformers and showed that they are universal approximators of sequence to sequence functions. However they do not study the low rank bottleneck caused by the prevalent head size heuristic and its connection to the embedding size.

\citet{voita2019analyzing, michel2019sixteen} study the importance of different heads in an attention layer. They observe that, during inference, many of the heads in each layer can be pruned away with a little effect on the prediction. However, they still need multiple heads during the training.

\citet{child2019generating,correia2019adaptively} impose sparsity structure on the attention layer during training to improve both interpretability and performance. Fixing the head size will in fact make it easier to learn such sparsity patterns, as a low rank constraint does not allow a head to express all possible sparsity patterns. Combining these techniques can hence potentially enable training of sparse attention models with a smaller embedding size.
 \section{Transformer Architecture and Analysis}
In this section, we present the Transformer architecture and analyze the representation power of the multi-head self attention, a key component of the Transformer block.

The input to a Transformer network is a sequence of $n$ tokens. Typically, each token is converted into a token embedding of dimension $d$ by an embedding layer. We let $\vecX \in \R^{d \times n}$ be the embedding matrix corresponding to the $n$ tokens in the input sequence.

\subsection{Single-Head Attention}
\label{sec:singlehead}
The Transformer block is a combination of a self attention layer followed by a feed forward layer \citep{vaswani2017attention}. Both layers have a skip connection and use Layer Normalization (LN) \citep{ba2016layer}. In particular, for token embeddings $\vecX$, the dot product attention is computed as follows.
\begin{align}
\label{eq:attn}
\text{Attention}(\vecX) &=   \vecW_v \vecX \cdot {\rm Softmax}\left[ \frac{(\vecW_k \vecX)^T (\vecW_q \vecX)}{\sqrt{d_k}} \right] =  \vecW_v \vecX \cdot \vecP.
\end{align} 
Here $\vecW_q \in \R^{d_q \times d}$, $\vecW_k \in \R^{d_k \times d}$ and $\vecW_v \in \R^{d_v \times d}$ represent the projection matrices associated with the query, key and value respectively in an attention unit \citep{vaswani2017attention}. For a single-head attention unit, we have $d_q = d_k =d_v =d$. In the dot-product attention (cf.~\eqref{eq:attn}), $\vecP$ aims to capture the context of the input for a given token based on the remaining tokens in the input sequence. Subsequently, the output of the attention layer takes the following form.
\begin{align}
\label{eq:attn_out}
\text{LN}\left( \vecX + \vecW_o \cdot \text{Attention}(\vecX) \right),
\end{align}
where $\text{LN}(\cdot)$ represents the layer-normalization operation. Given the attention module, as defined in \eqref{eq:attn}, it is natural to question its ability to represent arbitrary contexts $\vecP$ for a given input sequence $\vecX$.

In the following result we establish that for a large enough projection size an attention unit can represent any data pair $(\vecX, \vecP)$. We also show that the model cannot represent arbitrary context when $d$ is smaller than $n$, creating a low-rank bottleneck.
\begin{theorem}[Representation Theorem]\label{thm:representation}
If $d_q = d_k = d \geq n$, then 
given any full column rank matrix $\vecX \in \R^{d \times n}$ and an arbitrary $n \times n$ positive column stochastic matrix $\vecP$, there always exists $d\times d$ projection matrices $\vecW_q$ and $\vecW_k$ such that 
\begin{equation}\label{eq:representation}
  {\rm Softmax}\left[ \frac{(\vecW_k \vecX)^T (\vecW_q \vecX)}{\sqrt{d_k}} \right] = \vecP.  
\end{equation}
If $d_q = d_k = d < n$, there exist $\vecX$ and $\vecP$ such that \eqref{eq:representation} does not hold for all $\vecW_q$ and $\vecW_k$.
\end{theorem}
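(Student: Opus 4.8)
The plan is to handle the two halves of the statement separately, in both cases first stripping away the nonlinearity by using the fact that the column-wise ${\rm Softmax}$ is invariant under adding an arbitrary constant to each column. Concretely, ${\rm Softmax}(\vecM_1)={\rm Softmax}(\vecM_2)$ (column-wise) if and only if $\vecM_1-\vecM_2\in\calC:=\{\mathbf{1}\vec c^{T}:\vec c\in\R^{n}\}$, the subspace of constant-column matrices (this is just injectivity-up-to-shift of the per-column softmax, whose image is the open simplex). Also note that since $\vecW_q,\vecW_k\in\R^{d\times d}$, the matrix $\vecA:=\tfrac{1}{\sqrt{d_k}}\vecW_k^{T}\vecW_q$ may be taken to be an \emph{arbitrary} $d\times d$ matrix (e.g. $\vecW_k=\vecI_d$, $\vecW_q=\sqrt{d_k}\,\vecA$), so the logit matrix in \eqref{eq:representation} is exactly $\vecX^{T}\vecA\vecX$ for arbitrary $\vecA$.

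For the achievability direction ($d\ge n$): since $\vecP$ is positive, $\log\vecP$ (entrywise) is finite, and since $\vecP$ is column stochastic, ${\rm Softmax}(\log\vecP)=\vecP$. So it suffices to realize $\vecX^{T}\vecA\vecX=\log\vecP$. As $\vecX$ has full column rank, its left inverse $\vecX^{\dagger}:=(\vecX^{T}\vecX)^{-1}\vecX^{T}$ satisfies $\vecX^{\dagger}\vecX=\vecI_n$; choosing $\vecA:=(\vecX^{\dagger})^{T}(\log\vecP)\,\vecX^{\dagger}$ gives $\vecX^{T}\vecA\vecX=(\vecX^{\dagger}\vecX)^{T}(\log\vecP)(\vecX^{\dagger}\vecX)=\log\vecP$. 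Unwinding, $\vecW_k=\vecI_d$ and $\vecW_q=\sqrt{d_k}(\vecX^{\dagger})^{T}(\log\vecP)\vecX^{\dagger}$ work. This step is routine once the reduction is in place.

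For the impossibility direction ($d<n$): the tempting argument — the logit matrix $\vecX^{T}\vecA\vecX$ has rank $\le d$, while a generic $\log\vecP$ does not — \textbf{is the main obstacle, because it is actually false as stated}: subtracting an element of $\calC$ can drop the rank by one, and in particular when $d=n-1$ one can always choose column shifts making $\log\vecP+\mathbf{1}\vec c^{T}$ singular, so no $\vecP$ is ruled out by rank alone. The fix is to replace the rank bound by a dimension count. Fix any $\vecX\in\R^{d\times n}$. The attainable logit matrices form the linear subspace $\calL:=\{\vecX^{T}\vecA\vecX:\vecA\in\R^{d\times d}\}$, of dimension at most $d^{2}$. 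By the shift-invariance above, a positive column-stochastic $\vecP$ is representable if and only if $\log\vecP\in\calL+\calC$. But $\dim(\calL+\calC)\le d^{2}+n\le(n-1)^{2}+n=n^{2}-n+1<n^{2}$ for $n\ge2$, so $\calL+\calC$ is a proper subspace of $\R^{n\times n}$. Picking any $\vecY\in\R^{n\times n}\setminus(\calL+\calC)$ and setting $\vecP:={\rm Softmax}(\vecY)$ (automatically positive and column stochastic) yields the desired counterexample: if some $\vecW_q,\vecW_k$ satisfied \eqref{eq:representation}, then $\tfrac{1}{\sqrt{d_k}}\vecX^{T}\vecW_k^{T}\vecW_q\vecX-\vecY\in\calC$, forcing $\vecY\in\calL+\calC$, a contradiction.

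The only points needing care are: (i) pinning down that the softmax ``gauge group'' is exactly $\calC$, which is standard; and (ii) checking the strict inequality $d^{2}+n<n^{2}$ for every relevant $d\le n-1$ — it holds for all $n\ge2$, while $n=1$ (where $d<1$) is vacuous. Everything else is bookkeeping.
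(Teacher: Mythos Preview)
Your proposal is correct. For the $d\ge n$ direction, you and the paper take the same route: use the left inverse $\vecX^{\dagger}$ to reduce to an arbitrary $n\times n$ pre-softmax matrix, then set it to (a column-shift of) $\log\vecP$. Your write-up is marginally leaner---you note directly that $\vecW_k^{T}\vecW_q$ ranges over all of $\R^{d\times d}$ and that ${\rm Softmax}(\log\vecP)=\vecP$, whereas the paper introduces an auxiliary positive diagonal $\vecD_0$ and verifies a fixed-point identity---but the content is identical.

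For the $d<n$ direction the two arguments are genuinely different. The paper exhibits a single concrete counterexample at $d=1$, $n=2$: with $\vecX=\lbrack 1,\,0\rbrack$ the logit matrix necessarily has an all-zero second column, forcing the second column of the softmax to be $(\tfrac12,\tfrac12)^{T}$ regardless of $\vecW_q,\vecW_k$. This is instantly checkable but, as written, covers only one $(d,n)$ pair. Your dimension count---the attainable logits $\calL=\{\vecX^{T}\vecA\vecX\}$ have dimension at most $d^{2}$, the softmax gauge subspace $\calC$ adds at most $n$, and $(n-1)^{2}+n<n^{2}$ for all $n\ge2$---handles every $d<n$ uniformly and for \emph{any} fixed $\vecX$, at the price of being non-constructive. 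Your observation that a naive rank bound fails at $d=n-1$ (because the column-shift freedom can lower the rank of $\log\vecP$ by one) is a real subtlety, and the dimension argument is the right way around it.
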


This result shows that the projection dimension $d_q =d_k = d$ needs to be larger than the sequence length $n$ for the attention unit to be able to represent any desired context $\vecP$. Even though this result describes a single example sequence case, 
\CHANGES{it highlights a fundamental property of the model architecture that decreasing the projection size below a certain threshold introduces a bottleneck.}

\begin{proof}[Proof of Theorem \ref{thm:representation}]
$\mathbf{d \geq n}$ \textbf{case}.
To prove the first part of the result, we present an explicit construction of $\vecW_k$ and $\vecW_q$ which allows us to generate $\vecP$ from $\vecX$ using the dot product attention. Since $\vecX$ has full column rank, there exists a left inverse $\vecX^\dagger = (\vecX^T \vecX)^{-1} \vecX^T \in \reals^{n \times d}$ such that $\vecX^\dagger \vecX = \vecI_n$. Let $\vecW_k = \tilde{\vecW}_k \vecX^\dagger$ and  $\vecW_q = \tilde{\vecW}_q \vecX^\dagger$. Then  
\begin{align}
\label{eq:step1}
    \vecX^T \vecW_k^T \vecW_q \vecX = \vecX^T (\vecX^\dagger)^T \tilde{\vecW}_k^T \tilde{\vecW}_q \vecX^\dagger \vecX 
    &= \vecI_n \cdot \tilde{\vecW}_k^T \tilde{\vecW}_q \cdot \vecI_n \nonumber \\
    &=\tilde{\vecW}_k^T \tilde{\vecW}_q = \tilde{\vecW}_{kq}.
\end{align}
Now that the above choice of $\vecW_q$ and $\vecW_k$ has handled the dependence on $\vecX$, we will choose a $\tilde{\vecW}_{kq}$ depending on $\vecP$ and finish the construction. Below we express the Softmax operation on the query and key inner products. Note that the Softmax here is a columnwise operator computing the attention scores for each query. By using \eqref{eq:step1}, we obtain that
\begin{align*}
{\rm Softmax}\left[ \frac{(\vecW_k \vecX)^T (\vecW_q \vecX)}{\sqrt{d_k}} \right] &= {\rm Softmax}\left[ \frac{\tilde{\vecW}_{kq}}{\sqrt{d_k}} \right] 
= \exp\left(\frac{\tilde{\vecW}_{kq}}{\sqrt{d_k}}\right) \cdot \vecD_{\tilde{\vecW}_{kq}}^{-1},
\end{align*}
where $\vecD_{\tilde{\vecW}_{kq}}$ is an $n\times n$ diagonal matrix such that \begin{align*}
    (\vecD_{\tilde{\vecW}_{kq}})_{ii} &= \sum_{j=1}^n \exp\left(\frac{(\tilde{\vecW}_{kq})_{ji}}{\sqrt{d_k}}\right) = \left(\mathbf{1}^T \exp\left(\frac{(\tilde{\vecW}_{kq})}{\sqrt{d_k}}\right)\right)_i .
\end{align*}

Hence, we can establish the desired result by showing that there always exists a $\tilde{\vecW}_{kq}$ that satisfies the following fixed point equation. 
\begin{equation}
\label{eq:step2}
      \exp\left(\frac{\tilde{\vecW}_{kq}}{\sqrt{d_k}}\right) =  \vecP \cdot \vecD_{\tilde{\vecW}_{kq}}.
\end{equation}

Given $\vecP$, to construct such a $\tilde{\vecW}_{kq}$, we pick an arbitrary positive diagonal matrix $\vecD_0$, and set 
\begin{align}
    \label{eq:step3}
     \tilde{\vecW}_{kq} =\sqrt{d_k} \cdot \log \left( \vecP \cdot \vecD_0 \right).
\end{align}
Since $\vecP$ is a positive matrix, such a $\tilde{\vecW}_{kq}$ always exists. Next, we verify that this construction indeed satisfies the fixed point equation (cf.~\eqref{eq:step2}). Note that
\begin{align}
\vecD_{\tilde{\vecW}_{kq}} = {\rm Diag}\left(\mathbf{1}^T \exp\left(\frac{(\tilde{\vecW}_{kq})}{\sqrt{d_k}}\right) \right) 
= {\rm Diag}\left(\mathbf{1}^T \vecP \cdot \vecD_0 \right)  = \vecD_0. \label{eq:step4}
\end{align}
The last 
equation follows from the fact that $\vecP$ is a column stochastic matrix. Now, using \eqref{eq:step3} and \eqref{eq:step4},
\begin{align*}
    \exp\left(\frac{\tilde{\vecW}_{kq}}{\sqrt{d_k}}\right) = \vecP \cdot \vecD_0 
    = \vecP \cdot \vecD_{\tilde{\vecW}_{kq}}.
\end{align*}
This completes the first part of the proof. \\ \ \\
\noindent $\mathbf{d < n}$ \textbf{case}. Consider the case of $d=1$ and $n=2$. Then $\vecX \in \R^{1 \times 2}$ and $\vecW_q$ and $\vecW_k \in \R^{1 \times 1}$. Let $\vecX =\lbrack 1, 0\rbrack$. Then 
\begin{align*}{\rm Softmax}&\left[ \frac{(\vecW_k \vecX)^T (\vecW_q \vecX)}{\sqrt{d_k}} \right] = {\rm Softmax}\left[ \frac{ \lbrack 1, 0\rbrack^T \vecW_k^T \vecW_q \lbrack 1, 0\rbrack}{\sqrt{d_k}} \right]= {\rm Softmax}\left[ \begin{bmatrix} \vecW_k \vecW_q & 0 \\ 0 & 0 \end{bmatrix} \right].\end{align*}
This matrix clearly cannot be used to generate $\vecP$ that have distinct elements in the second column, e.g., $\vecP=\begin{bmatrix} 0.5 & 0.75 \\ 0.5 & 0.25 \end{bmatrix}$.
\end{proof}


\subsection{Multi-Head Attention}
\label{sec:multihead}
As discussed in Section~\ref{sec:singlehead}, an attention unit updates the embedding of an input token based on a weighted average of the embeddings of all the tokens in the sequence, using the context $\vecP$ (cf.~\eqref{eq:attn}). \citet{vaswani2017attention} proposed Multi-Head attention mechanism that increases the representation power of an attention layer, where multiple attention units operate on different low dimensional projections of the input, with each attention unit being referred to as a head. This is followed by concatenation of the outputs from different heads. In particular, the computation inside a Multi-Head attention with $h$ heads takes the following form:
\begin{align*}
\text{head}(\vecX)_i  =\vecW_v^i  \vecX \cdot {\rm Softmax}\left[\nicefrac{(\vecW_k^i \vecX)^T (\vecW_q^i \vecX)}{\sqrt{\frac{d}{h}}} \right]  \in \R^{\frac{d}{h} \times n}
\end{align*}
\begin{align*}
\text{MultiHead}(\vecX) = \text{Concat}[\text{head}(\vecX)_1, \cdots, \text{head}(\vecX)_h] \in \R^{d \times n}.
\end{align*}
The output of the Multi-head attention layer then becomes
\begin{equation}
\label{eq:mutlihead}
\vecZ = \text{LN}\left( \vecX + \vecW_o \cdot \text{MultiHead}(\vecX) \right),
\end{equation}
where $\vecW_o \in \R^{d \times d}$. For a model with $h$ heads, the query, key and value projection matrices $\{\vecW_q^i\}$, $\{\vecW_k^i\}$ and $\{\vecW_v^i\}$ are $\frac{d}{h} \times d$ matrices. Therefore, each head projects the input onto a $\frac{d}{h}$-dimensional subspace to compute the context, and keeps the number of parameters fixed per layer. Using MultiHead has resulted in empirically better performance over the single head attention layer \citep{vaswani2017attention}.

\subsection{Low-Rank Bottleneck}
While increasing the number of heads seemingly gives the model more expressive power, at the same time we are reducing the head size, which can decrease the expressive power. When the number of heads $h$ is larger than $\frac{d}{n}$, the attention unit inside each head projects onto a dimension smaller than $n$, creating a low-rank bottlenck and loses its ability to represent arbitrary context vectors (cf.~Theorem \ref{thm:representation}). 
Interestingly, this is consistent with the empirical observation in Table~\ref{table:1} that increasing $h$ beyond 8 results in performance degradation in $\BL$ \citep{devlin2018bert}; note that $d = 1024$ and $n = 128$ for most of the pre-training phase of $\BL$.

Since the sequence length is fixed from the data/task at hand, the only way to increase the number of heads without introducing the low-rank bottleneck is by increasing the embedding size $d$. This is a fundamental limitation of the currently dominant head size heuristic, that we need to increase the embedding size in order to support more heads.

Unfortunately, increasing the embedding size leads to higher computation and memory requirements to train and store the model. Further, since it is common to use learned embeddings from Transformer based models for downstream tasks \citep{devlin2018bert}, larger embedding size increases the model size and computation required for all the downstream tasks as well.

 \newcommand{\sfmx}{{\rm Softmax}}

\begin{figure*}[!t]
\vskip 0.2in
\begin{center}
\begin{subfigure}[b]{0.48\textwidth}
	\includegraphics[width=\columnwidth]{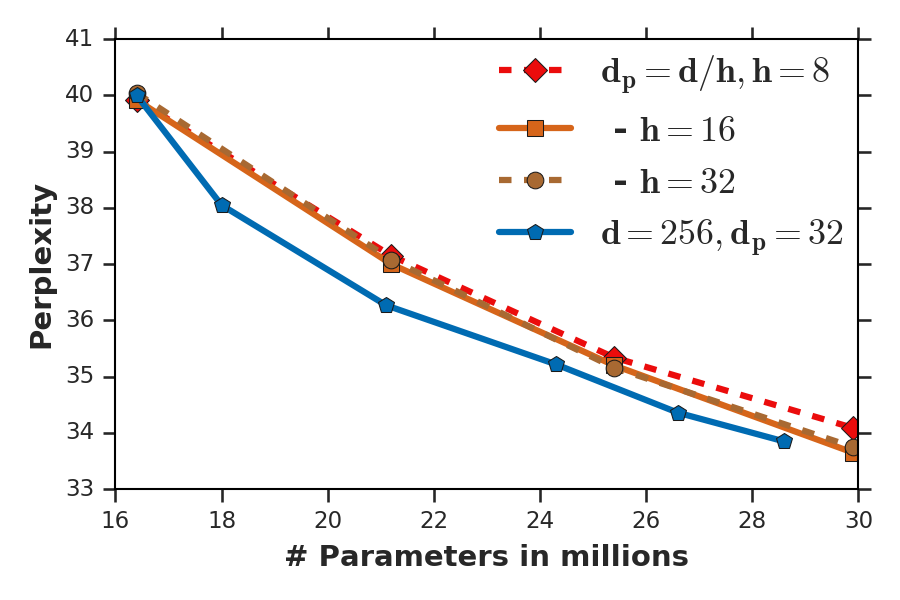}
	\caption{LM1B}
	\label{fig:lm1b_params}
	\end{subfigure}
	\begin{subfigure}[b]{0.48\textwidth}
	\includegraphics[width=\columnwidth]{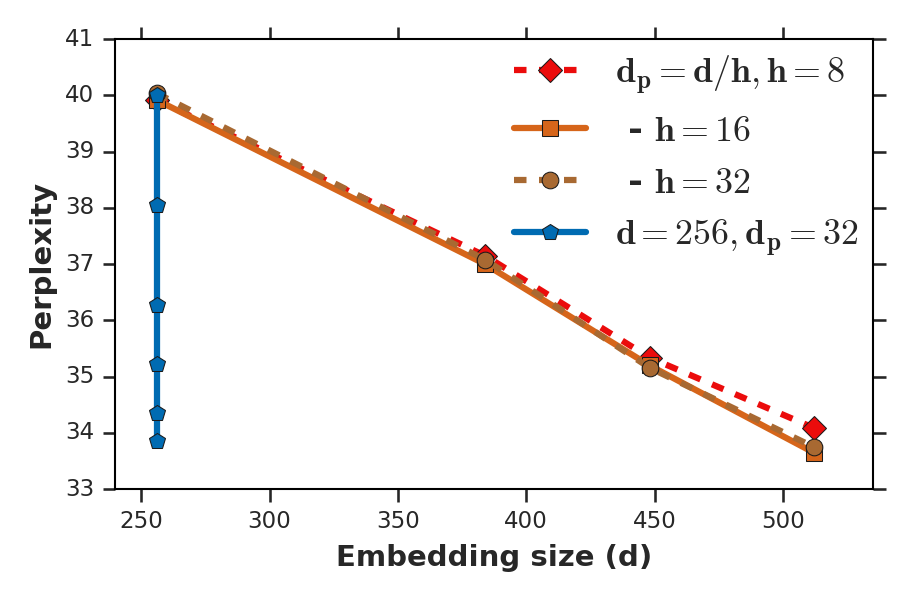}
	\caption{LM1B}
	\label{fig:lm1b_embed}
	\end{subfigure}
	\caption{ \CHANGES{Performance of Transformers trained with the prevalent head size heuristic ($d_p =\nicefrac{d}{h}$) (baseline) compared with the fixed head size ($d_p=32$) on a language modeling task (LM1B) on the test set. We train baseline models with embedding sizes from 256 to 512. We train the fixed head size models with a fixed embedding size of 256 and a head size of 32, and vary the number of heads from 4 to 70, while matching the number of parameters. The plots clearly indicate that fixing the head size allows us to train Transformers with a smaller embedding size (plot (b)), and with a better scaling of performance (plot (a)). Note that for perplexity lower values are better.}}
  \label{fig:lm1b}
  \end{center}
\vskip -0.2in
\end{figure*}

\section{Fixed Multi-Head Attention}
\label{sec:cmt}
In this section we propose to fix the head size of the Transformer, which allows us to enjoy the advantage of higher expressive power of multiple heads without requiring the embedding size to be large. The key is to decouple the dependency between the projection size in a head and the embedding size of the model. The projection matrices now project onto subspaces of a fixed dimension $d_p$ irrespective of the number of heads $h$. This approach where $d_p$ is independent of $d$ and $h$ leads to the following attention mechanism.
\begin{align*}
\text{\rm fixedhead}(\vecX)_i   =  \vecV_v^i  \vecX \cdot {\rm Softmax}\left[\nicefrac{(\vecV_k^i \vecX)^T (\vecV_q^i \vecX)}{\sqrt{d_p}} \right]  \in \R^{d_p \times n}
\end{align*}
\begin{align*}
\text{FixedMultiHead}(\vecX)  =  \text{Concat}[\text{\rm fixedhead}(\vecX)_1, \cdots, \text{\rm fixedhead}(\vecX)_h] \in \R^{d_p \cdot h \times n}.
\end{align*}
Note that the projection matrices used here $\{\vecV_q^i\}$, $\{\vecV_k^i\}$ and $\{\vecV_v^i\}$ are $d_p \times d$ matrices. 
With $\vecV_o \in \R^{d \times h\cdot d_p}$, the output of this new multi-head attention layer takes the following form. 
\begin{align*}
\vecZ = \text{LN}\left( \vecX + \vecV_o \cdot \text{FixedMultiHead}(\vecX) \right).
\end{align*}

This modification makes each attention head more similar to a hidden unit in a feed forward network or a filter in a convolutional network, and allows us to vary the number of heads without worrying about reducing the representation power per head. The downside is, unlike the standard MultiHead, the number of parameters per layer increases with the number of heads. However, this modification allows us to train a model with a smaller embedding size without a low-rank bottleneck, ultimately allowing us to reduce the total number of parameters in the model.


\subsection{MultiHead vs.\ FixedMultiHead Attention}

Given a MultiHead layer, we can always represent it using a FixedMultiHead layer, whenever we have the head size $d_p \geq \nicefrac{d}{h}$.
While this shows that increasing the number of heads $h$ beyond $\nicefrac{d}{d_p}$ makes individual heads of the FixedMultiHead as expressive as the ones in the MultiHead, it is not obvious if FixedMultiHead is \emph{strictly} more expressive. Can the FixedMultiHead layer represent functions that the standard MultiHead layer can not represent? In this subsection we show that indeed, in the multi-head regime, the FixedMultiHead layer is strictly better than the standard MultiHead layer in terms of expressive power.

Consider the standard multi-head attention units in \eqref{eq:mutlihead}.\begin{equation*}
    f_\vecW (\vecX) = \vecW_o \cdot \text{MultiHead}(\vecX).
\end{equation*}
We denote the collection of all parameter matrices as $\vecW$. Similarly, consider the function represented by the fixed head size attention units:
\begin{equation*}
    g_\vecV (\vecX) =  \vecV_o \cdot \text{FixedMultiHead}(\vecX). 
\end{equation*} Let $\vecV$ be the collection of all these parameter matrices.
We define $\m F$ and $\m G$ to be the class of functions $f_\vecW(\cdot)$ and $g_\vecV(\cdot)$, respectively. As noted above, if $d_p \geq \nicefrac{d}{h}$, we have $\m F \subset \m G$.

The following theorem shows that even for simple examples in $\m G$, functions in $\m F$ fail to  
\CHANGES{represent them; this already shows that $\m F$ is a \emph{strict} subset of $\m G$.}
\begin{theorem}
\label{thm:comparison}
Let $n \geq 2$, $d \geq d_p$, and $h > \nicefrac{d}{d_p}$. Consider a \textup{FixedMultiHead} attention layer $g_\vecV(\cdot)$ with parameters that satisfy the following conditions: 
\begin{equation*}
\vecV_o \times \begin{bmatrix}\vecV_v^1 \\ \vdots \\ \vecV_v^h \end{bmatrix}
\text{ is full rank, and~}
(\vecV_k^i)^T \vecV_q^i = \vecU,  \text{ for all } i = 1, \dots, h,
\text{ where $\vecU$ is a rank-$d_p$ matrix. }
\end{equation*}
Then, for any $f_{\vecW} \in \m F$, there exists $\vecX \in \reals^{d \times n}$ such that $f_{\vecW}(\vecX) \neq g_{\vecV}(\vecX).$
\end{theorem}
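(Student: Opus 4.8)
The plan is to pick a specific, highly structured $g_{\vecV}$ satisfying the hypotheses, exhibit the exact form of the function it computes, and then show that no $f_{\vecW} \in \m F$ can match it on all inputs by a rank/dimension-counting argument at a cleverly chosen family of inputs $\vecX$. First I would simplify $g_{\vecV}$ using the assumption $(\vecV_k^i)^T \vecV_q^i = \vecU$ for all $i$: then every head computes the same softmax matrix $\vecP(\vecX) = \sfmx[\vecX^T \vecU \vecX / \sqrt{d_p}]$, so $\text{FixedMultiHead}(\vecX)$ has all its block rows equal to $\vecV_v^i \vecX \vecP(\vecX)$, and hence $g_{\vecV}(\vecX) = \vecV_o \begin{bmatrix}\vecV_v^1 \\ \vdots \\ \vecV_v^h\end{bmatrix} \vecX \vecP(\vecX) =: \vecB \vecX \vecP(\vecX)$, where $\vecB \in \reals^{d\times d}$ is full rank by hypothesis. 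So $g_{\vecV}(\vecX) = \vecB \vecX \vecP(\vecX)$ with $\vecB$ invertible and $\vecU$ of rank exactly $d_p$.

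Next I would write out $f_{\vecW}(\vecX)$ in comparable form. Each standard head produces $\vecW_v^i \vecX \vecP_i(\vecX)$ with $\vecP_i(\vecX) = \sfmx[\vecX^T (\vecW_k^i)^T \vecW_q^i \vecX / \sqrt{d/h}]$, so $f_{\vecW}(\vecX) = \sum_{i=1}^h \vecW_o^{(i)} \vecW_v^i \vecX \vecP_i(\vecX)$, where $\vecW_o^{(i)}$ is the $i$-th column block of $\vecW_o$ and $\vecW_o^{(i)} \vecW_v^i$ has rank at most $d/h < d_p$. The suspicion is that the difference $g_{\vecV}(\vecX) - f_{\vecW}(\vecX) = \vecB\vecX\vecP(\vecX) - \sum_i \vecW_o^{(i)}\vecW_v^i \vecX \vecP_i(\vecX)$ cannot vanish identically. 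Assume for contradiction it does. The strategy is to probe this identity along a curve or one-parameter family of inputs $\vecX_t$ chosen so that (i) $\vecX_t$ stays full column rank, (ii) the $\vecP(\vecX_t)$ side and the $\vecP_i(\vecX_t)$ side become controllable — e.g., letting some entries of $\vecX_t$ blow up so that the softmaxes concentrate on a single entry, making each $\vecP_i(\vecX_t)$ converge to a $0/1$ selection matrix, while $\vecP(\vecX_t)$ is pinned down by $\vecU$. At such limits $f_{\vecW}$ restricted to a column is a linear map of rank $\le d/h$ per head (so the "reachable" subspace across all columns has dimension $\le h\cdot (d/h)$ but more usefully each individual column of the output lies in $\text{col}(\vecW_o^{(i)}\vecW_v^i)$ for the active head), whereas $\vecB\vecX\vecP(\vecX)$ can be forced to have a column lying outside every such low-rank subspace because $\vecB$ is invertible and $\vecX$ ranges freely.

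The cleanest execution I can see: since $\m F$ is a fixed finite union of parametrized families, use an algebraic/analyticity argument. The entries of $f_{\vecW}(\vecX)$ and $g_{\vecV}(\vecX)$ are real-analytic in $\vecX$ on the open dense set where $\vecX$ has full column rank. If they agreed on that set, they agree everywhere it makes sense. Then I would differentiate, or evaluate at a carefully perturbed rank-deficient-limit input, to extract a contradiction with the rank gap $d_p > d/h$ — concretely, choose $\vecX$ so that $\vecX \vecP(\vecX)$ spans a $d_p$-dimensional column space (possible because $\vecU$ has rank $d_p$ and by the $d\ge n$-style construction in Theorem~\ref{thm:representation} one can realize a rich family of $\vecP$), force $\vecB\vecX\vecP(\vecX)$ to have rank $d_p$, yet argue that near a limit where the $\vecP_i$ degenerate, $\sum_i \vecW_o^{(i)}\vecW_v^i \vecX \vecP_i(\vecX)$ has rank $< d_p$ along that column pattern, contradicting equality.

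The main obstacle is controlling the standard-head side: the $\vecP_i(\vecX)$ are \emph{different} across heads and each is a full-rank-looking stochastic matrix, so "rank $\le d/h$ per head" does not immediately cap the rank of the sum $f_{\vecW}(\vecX)$. The real work is in choosing the probing family $\vecX_t$ (most likely pushing the softmaxes to the low-entropy regime, or using an input supported so that only a few distinct columns matter) so that the contributions of the $h$ heads cannot conspire to reconstruct the full-rank behavior forced by $\vecB$ and $\vecU$ on the $g_{\vecV}$ side — essentially a clean separation between "$h$ rank-$(d/h)$ pieces glued by varying stochastic matrices" and "one rank-$d_p$ piece with a single stochastic matrix," exploiting $h\cdot(d/h) = d$ but the per-column rank being the binding constraint. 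This is exactly the "novel construction based approach" the introduction advertises, and it is where all the difficulty lies.
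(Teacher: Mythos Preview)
Your reduction of $g_{\vecV}$ to $g_{\vecV}(\vecX) = \vecB\,\vecX\,\vecP(\vecX)$ with $\vecB$ invertible and $\vecP(\vecX)=\sfmx[\vecX^T\vecU\vecX/\sqrt{d_p}]$ is exactly right and matches the paper. After that, however, your plan has a real gap.

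You try to separate $g_{\vecV}$ from $f_{\vecW}$ by comparing the \emph{rank of the output matrix}: make $\vecB\vecX\vecP(\vecX)$ have rank $d_p$ while arguing the sum $\sum_i \vecW_o^{(i)}\vecW_v^i \vecX \vecP_i(\vecX)$ has smaller rank. You already see the obstruction and say so --- the $h$ heads can (and typically will) span all of $\reals^d$, so the sum's column space is not capped at $d/h$. Pushing the $\vecP_i$ to $0/1$ selection matrices does not help either, since after concentration you still get a sum of $h$ rank-$(d/h)$ maps with possibly full-rank total. The analyticity/differentiation remark is not a substitute for a construction; it only tells you that if equality fails somewhere it fails on an open set, not that it fails at all. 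So as written, the proposal stops precisely at the point it identifies as ``where all the difficulty lies,'' without a mechanism to resolve it.

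The paper's argument is structurally different: instead of bounding the rank of the full output, it probes a \emph{single column} and exploits a linear-independence fact you are missing. First it does a case split. If $\sum_i \vecW_o^{(i)}\vecW_v^i \neq \vecB$, take $\vecX=\vecv\vecone^T$; every softmax becomes irrelevant because $\vecX\vecP=\vecX$ for any column-stochastic $\vecP$, and the two sides differ. If $\sum_i \vecW_o^{(i)}\vecW_v^i = \vecB$, then because $\vecB$ is invertible and each $\vecW_o^{(i)}\vecW_v^i$ has rank at most $d/h$, the vectors $\{\vecW_o^{(i)}\vecW_v^i\vecv\}_{i=1}^h$ are \emph{linearly independent} for every $\vecv\neq 0$. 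This is the key lever: take $\vecX=\vecv\vece_1^T$ (or $\vecX=[\vecv_1\;\vecv_2\;0\cdots 0]$ in the remaining skew-symmetric subcase), so the first column of $f_{\vecW}(\vecX)-g_{\vecV}(\vecX)$ is a linear combination of these independent vectors with scalar coefficients depending on the quadratic forms $\vecv^T(\vecW_k^i)^T\vecW_q^i\vecv$ versus $\vecv^T\vecU\vecv$. Linear independence forces \emph{each} coefficient to vanish for equality, and the rank gap $d_p>d/h$ (so $\vecU/\sqrt{d_p}-(\vecW_k^i)^T\vecW_q^i/\sqrt{d/h}$ cannot be zero, and in the last case cannot even be skew-symmetric for all $i$) gives a $\vecv$ making some coefficient nonzero. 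The softmax-concentration trick you mention does appear, but only in this last subcase and acting on a single head's coefficient, not on a global rank count.

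In short: the missing idea is to use extremely low-rank inputs to collapse the problem to one output column, and then invoke the per-head linear independence forced by $\sum_i \vecW_o^{(i)}\vecW_v^i=\vecB$ being full rank. Your output-rank strategy does not supply a replacement for this.
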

Because $\norm{f_{\vecW}(\vecX)-g_{\vecV}(\vecX)}$ is a continuous function of $\vecX$, existence of such an $\vecX$ implies that the integral of the norm of difference (i.e., approximation error) is strictly positive. 
\CHANGES{We note that the assumptions on $\vecV_k^i$ and $\vecV_q^i$ in the above Theorem are made to provide a simple and constructive proof; in fact, failure of MultiHead ($\m F$) to represent such simple attention layers suggests that the situation is likely worse for more complex functions.}

Theorem~\ref{thm:comparison} shows that the expressive power of the FixedMultiHead attention function class is strictly superior to the standard MultiHead attention function class. Hence the heuristic of reducing the head size with the number of heads is limiting the expressive power of MultiHead, whereas using the fixed head size will increase the expressive power of the attention layers.

\begin{figure*}[!t]
\centering
    \begin{subfigure}[b]{0.32\textwidth}
	\includegraphics[width=\textwidth]{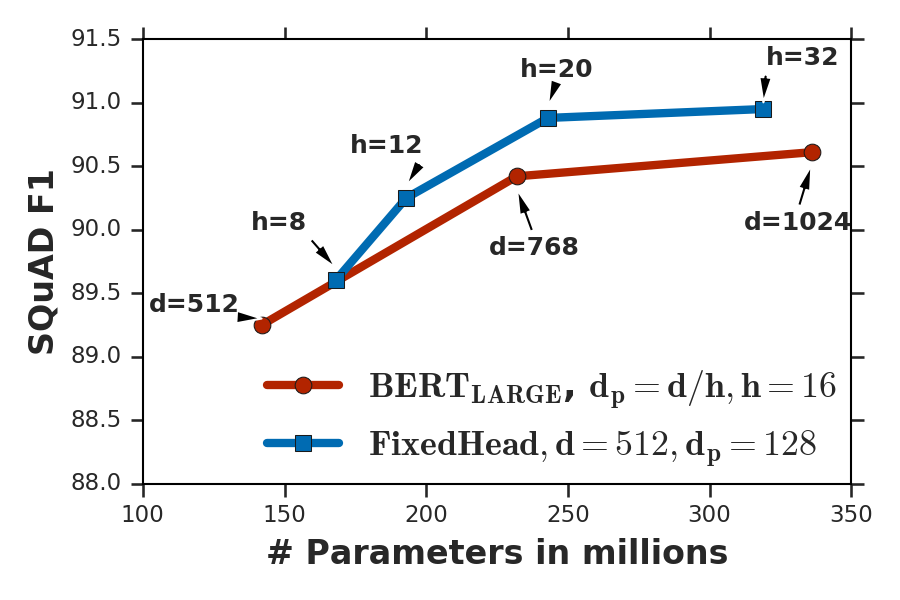}
	\caption{SQuAD F1}
	\label{fig:squad_f1}
	\end{subfigure}
	\begin{subfigure}[b]{0.32\textwidth}
	\includegraphics[width=\textwidth]{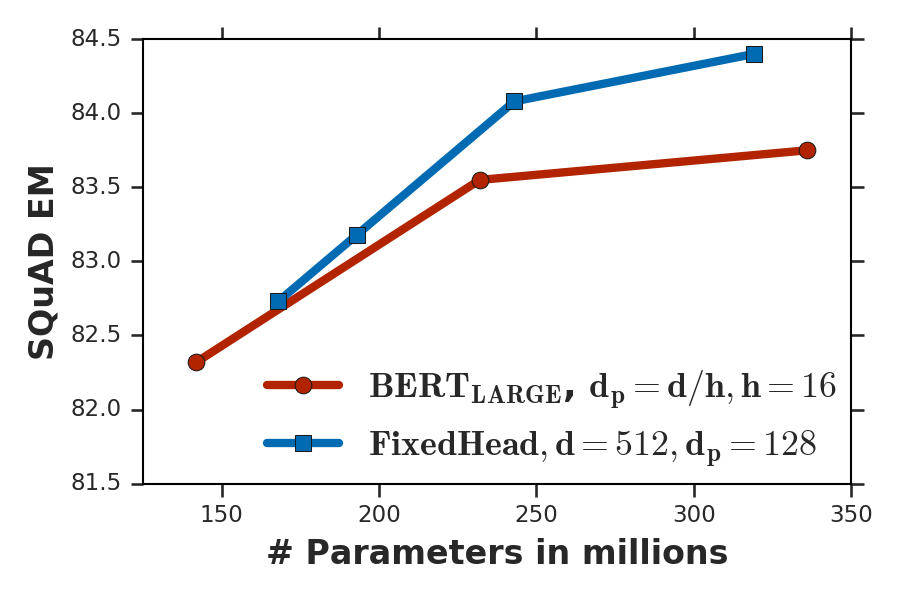}
	\caption{SQuAD EM}
	\label{fig:squad_em}
	\end{subfigure}
	\begin{subfigure}[b]{0.32\textwidth}
	\includegraphics[width=\textwidth]{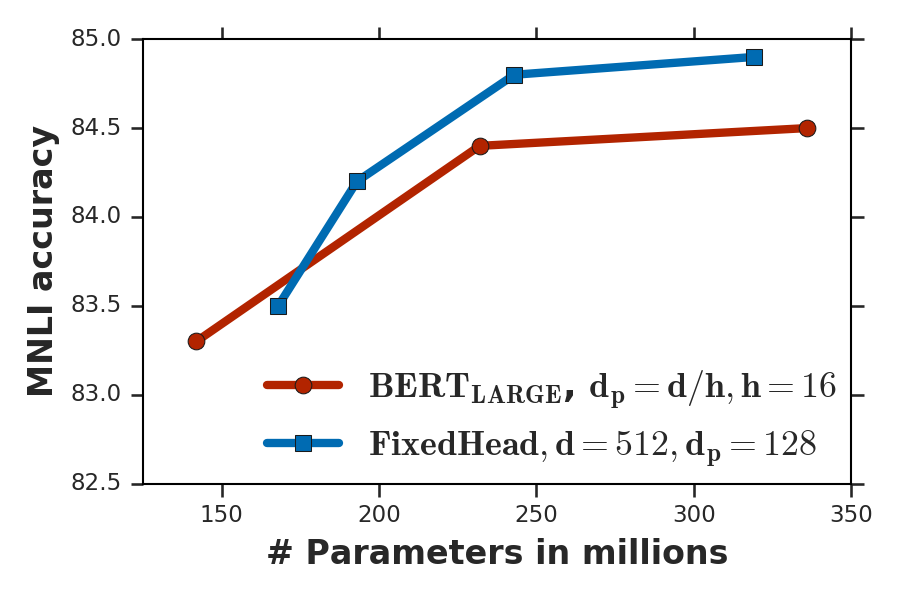}
	\caption{MNLI}
	\label{fig:mnli}
	\end{subfigure}
	\caption{ \CHANGES{Comparison of 24 layer Transformer models trained with the prevalent head size heuristic $\BL$ (baseline) vs.\ the fixed head size model on the SQuAD and MNLI dev sets. We vary the embedding size of the baseline models from 512 to 1024. We train the fixed head size models with a fixed embedding size of 512 and a head size of 128, with a varying number of heads from 8 to 32, while matching the number of parameters. Fixing the head size allows us to train models with a smaller embedding size of 512 and with a better performance.}}
 \label{fig:bert}
\end{figure*}

\begin{figure*}[!t]
\centering
    \begin{subfigure}[b]{0.45\textwidth}
	\includegraphics[width=\textwidth]{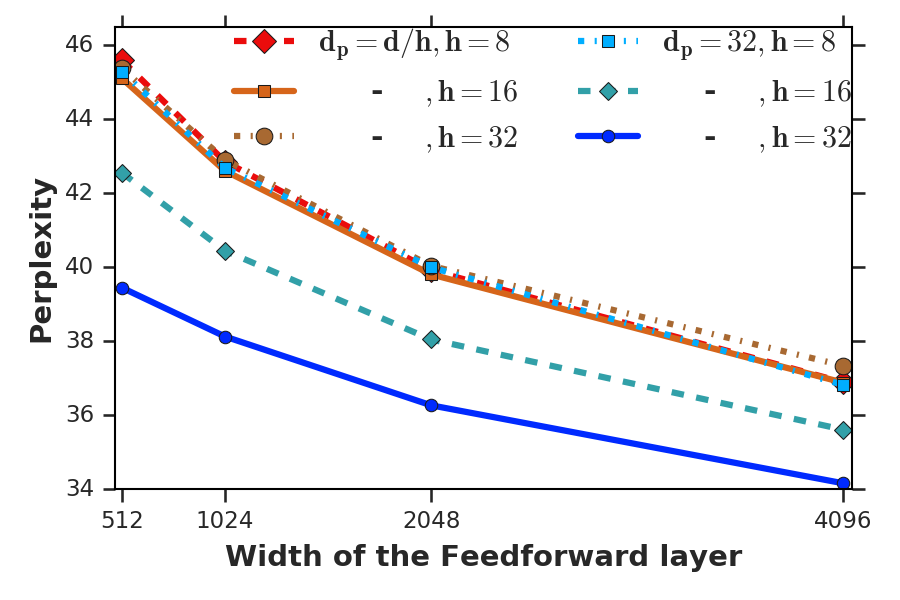}
	\caption{ }
	\label{fig:varying_ff}
	\end{subfigure}
	\begin{subfigure}[b]{0.46\textwidth}
	\includegraphics[width=\textwidth]{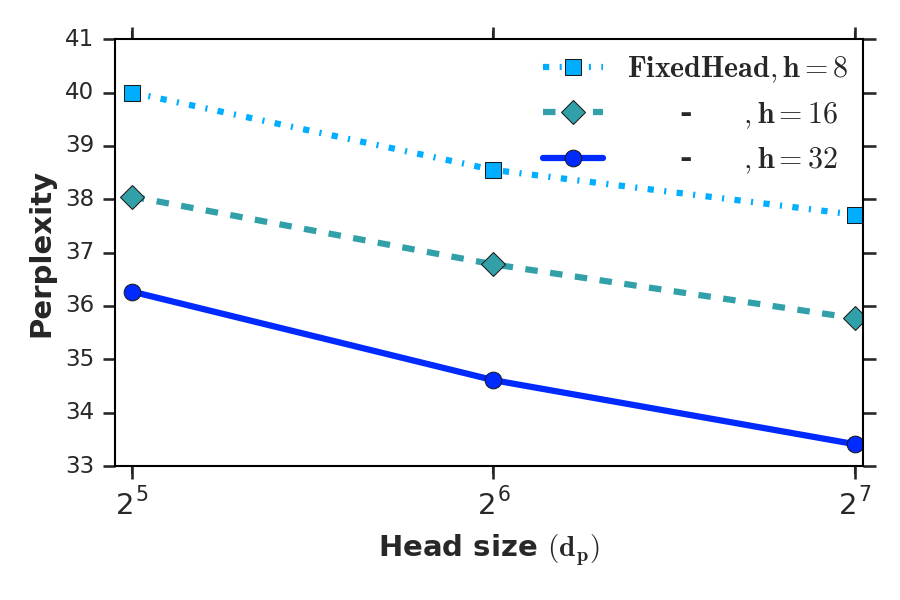}
	\caption{ }
	\label{fig:head_size}
	\end{subfigure}
	\caption{ Ablation studies on LM1B: (a) We fix the embedding size of all the models to 256 and vary the capacity of Transformers trained with the prevalent head size heuristic (baseline) by increasing the size of the feedforward layers. For the fixed head size models we fix the head size to 32, so 8 head fixed head size model is the same as the 8 head baseline model. We notice that again with the standard heuristic increasing the number of heads beyond 16 hurts the performance, whereas with a fixed head size increasing the number of heads monotonically improves the performance. (b) We show the effect of head size on the performance with different number of heads. Both plots clearly show the advantage in having an additional way to tune the capacity of Transformers with a fixed embedding size.}
	\label{fig:lm_ablation}
\end{figure*}

\section{Experiments}\label{sec:experiments}

\CHANGES{The goal of this section is to show that setting the head size in a principled way leads to better performance than using the prevalent heuristic. We again note that while this is a simple hyper-parameter change to the Transformer, setting this to input sequence length as shown in our analysis, allows us to train better models with a smaller embedding size. }

In this section we present our experiments on three standard NLP tasks, language modeling (LM1B), question answering (SQuAD), and sentence entailment (MNLI), to demonstrate: 1) Increasing the number of heads in Transformers beyond a certain point hurts the performance with the prevalent head size heuristic, but always helps with the fixed head size attention layers; 2) Decoupling the head size from embedding size allows us to train models with a smaller embedding size; and 3) Setting the head size appropriately in the Transformers allows us to train models with a better performance scaling. We first describe our experimental setup followed by our results and ablation studies on the proposed modifications.

\subsection{Setup and Datasets}

For the language modeling task we use the one billion word benchmark dataset (LM1B) \citep{lm1b}. This dataset has around 30M training examples and around 300k examples in the test set. We use a sub-word tokenizer with 32k vocab and cap the input to 256 sequence length. We train a 6 layer Transformer model with the ADAM optimizer using the tensor2tensor library \citep{tensor2tensor}. The detailed experimental setting is presented in Section \ref{sec:appx_parameters}.

Multi-Genre Natural Language Inference (MNLI) is a sentence level entailment task, designed to test natural language understanding \citep{MNLI}. Given a premise sentence and a hypothesis sentence, the goal is to predict whether hypothesis entails, contradicts or is neutral to the premise. We report the classification accuracy for this task. Stanford Question Answering Dataset (SQuAD) is a question answering dataset, where given a paragraph and a question, the goal is to predict the sequence of words in the paragraph that constitute the answer to the question \citep{rajpurkar2016squad}. This is a harder word level task, compared to the sentence classification task. We report both Exact Match (EM) and F1 scores for this task. All results in this section are reported on the Dev set, which has not been used in any experimental choices in this paper. 

For these latter two tasks, we follow the two stage approach of first pre-training on a language modeling task and then fine-tuning the models on the task data. We follow the same experimental setup for both pre-training and fine-tuning as BERT \citep{devlin2018bert}, and use their codebase\footnote{https://github.com/google-research/bert}. We first pre-train our models using the masked language model and the next sentence prediction objectives, and then fine tune the pre-trained model for individual tasks \citep{devlin2018bert}. For pre-training we use English Wikipedia and BooksCorpus dataset \citep{zhu2015aligning}. The input to the models is tokenized using the WordPiece representation with 30000 tokens in the vocabulary. We present the key experiment choices in Section \ref{sec:appx_parameters}, and refer the reader to \citet{devlin2018bert} for a complete description of the setup.

\begin{table*}[!t]
\centering
\begin{tabular}{|c| c c c c|} 
 \hline
 \# heads & 8 & 12 & 16  &  32 \\ [0.5ex] 
 \hline
 \# params & 168M & 193M & 218M  & 319M \\[0.5ex] 
 \hline
 SQuAD - F1 & 89.6$\pm$0.17 & 90.25$\pm$0.21 & 90.43$\pm$0.14  & 90.95$\pm$0.14 \\ [0.5ex]
 SQuAD - EM & 82.73$\pm$0.21 &83.18$\pm$0.24 & 83.59$\pm$0.06 & 84.4$\pm$0.29 \\[0.5ex]
 MNLI & 83.5$\pm$0.2 & 84.2$\pm$0.2 & 83.9$\pm$0.2 & 84.9$\pm$0.2 \\ [1ex] 
 \hline
   \multicolumn{5}{c}{(A) Increasing number of heads} \\
  \multicolumn{5}{c}{}\\
  \hline
 head size & 32 & 64 & 128 & 256 \\ [0.5ex] 
 \hline
  \# params & 130M  & 142M & 168M & 218M \\ [0.5ex] 
 \hline
 SQuAD - F1 & 88.53$\pm$0.06 & 89.51$\pm$0.15 & 89.6$\pm$0.17 & 90.33$\pm$0.23 \\ [0.5ex]
 SQuAD - EM & 81.19$\pm$0.21 & 82.41$\pm$0.32 & 82.73$\pm$0.21  & 83.36$\pm$0.48 \\[0.5ex]
 MNLI & 82.5$\pm$0.1 & 83.4$\pm$0.3 &  83.5$\pm$0.2 & 83.9$\pm$0.2 \\ [1ex] 
 \hline
    \multicolumn{5}{c}{(B) Increasing head size} \\
\end{tabular}
\caption{Ablation studies on SQuAD and MNLI: (A) 24 layer Transformer with a fixed head size of 128 and 512 embedding size shows an improvement in the accuracy with the increasing number of heads. (B) The fixed head size model with 512 embedding size and 8 heads shows an improvement in accuracy with the increasing head size. This shows that indeed head size is an important capacity controlling parameter in the self attention architecture.}
\label{table:3}
\end{table*}

\noindent{\bf Choice of the head size.}~Our proposed modification introduces head size $d_p$ as a new model hyper-parameter. We choose head size to be $128$ for our BERT experiments, as most of the pre-training is done with 128 sequence length data. While we have ablation studies (cf.~Table~\ref{table:3}(B)) showing bigger head size improves the performance, there is a tradeoff between increasing the head size vs number of heads vs layers. We found that having sufficiently large head size, e.g., matching the pre-training sequence length, is better than having a larger embedding size. 

\subsection{Results}
\CHANGES{For our first set of experiments we want to see if Transformers trained with a fixed head size  and a smaller embedding size can match the performance of training with the standard head size heuristic but with a larger embedding size. As a baseline for the language modeling task, we train Transformers with the embedding size increasing from 256 to 512 with different number of heads. We train the fixed head size models with a fixed embedding size of 256 and a head size of 32, with an increasing number of heads from 4 to 70. We notice that Transformers with a fixed head size and an embedding size of 256 have better performance than the baseline models with an embedding size of 512 (see Fig.~\ref{fig:lm1b}). We repeat the similar experiment on the other two tasks, where for baseline we train $\BL$, a 24 layer, 16 head Transformer with the standard head size heuristic, with embedding sizes from 512 to 1024. We compare it with the fixed head size model, with an embedding size of 512 and a head size of 128, with an increasing number of heads from 8 to 32. We again notice that the Transformers trained with a fixed head size and 512 embedding size have better performance than the baseline, $\BL$ (see Fig.~\ref{fig:bert}).}

Note that simply trying to increase the head size of the Transformers by decreasing the number of heads does not improve the performance, as decreasing the number of heads reduces the expressive power of the model (see Fig.~\ref{fig:varying_scale_base} in the Appendix). Hence, both the head size and the number of heads needs to be set high enough for better performance.

\subsection{Ablation}

\noindent{\bf Increasing heads.}~From Table~\ref{table:1} and Fig.~\ref{fig:lm1b_params} we can see that increasing the number of heads hurts the performance of the Transformer after a certain number. We repeat the same experiments with the fixed head size Transformer, and present the results in Table~\ref{table:3}(A) and Fig.~\ref{fig:varying_ff}. The results show that the performance of the modified model improves monotonically as the number of heads increase. This is because the model capacity (a function of the head size) is no longer reduced with the increasing number of heads. 

\noindent{\bf Increasing head size.}~In Table~\ref{table:3}(B) and Fig.~\ref{fig:head_size}, we present comparisons between models with different head sizes. This shows that the gains in the performance of the fixed head size models indeed come from adjusting the head size of the query, key and value layers in the attention unit. The table shows a clear trend of better performance with a larger head size, suggesting that it indeed is an important factor in the performance of the attention models.

\section{Conclusion}
In this paper we studied the representation power of the multi-head self attention models and proved the low-rank bottleneck that results from a small head size in the multi-head attention. We showed that the larger embedding size used in the current models is a consequence of this low-rank bottleneck in multi-head attention layers. We propose to instead use fixed head size attention units, with the head size set to input sequence length, to avoid this bottleneck. We showed that it allows us to increase the number of heads without increasing the embedding size. As a consequence we are able to train Transformers with a smaller embedding size and fewer parameters, with better performance. In the future, it will be interesting to experiment with varying head sizes within an attention block and across layers. This requires further understanding of the role of each layer in computing the context, which is an interesting direction for the future work.

\bibliographystyle{abbrvnat}
 \bibliography{references}

 \clearpage
 \appendix
 \section{Notation}
 \begin{center}
\begin{tabular}{ |c|c| } 
 \hline
 Embedding size & $d$ \\ \hline
 Number of layers & $l$ \\ \hline
 Number of heads & $h$ \\  \hline
 Sequence length & $n$ \\  \hline
 Vocab size & $v$ \\  \hline
 Head size & $d_p$ \\  \hline
\end{tabular}
\end{center}

\newcommand{\veczero}{\mathbf{0}}
\newcommand{\vecone}{\mathbf{1}}

\section{Proofs}

\begin{proof}[Proof of Theorem~\ref{thm:comparison}]
First let us rewrite the MultiHead and FixedMultiHead layers as follows. The MultiHead layer can be rewritten as
\begin{equation*}
    f_\vecW (\vecX) = \vecW_o \cdot \text{MultiHead}(\vecX) = \sum_{i=1}^h \vecW_o^i \vecW_v^i \vecX \cdot \sfmx \left[\nicefrac{(\vecW_k^i \vecX)^T (\vecW_q^i \vecX)}{\sqrt{\frac{d}{h}}} \right],
\end{equation*}
where $\vecW_o^i$ are $d \times d/h$ matrices and $\vecW_v^i$, $\vecW_k^i$, and $\vecW_q^i$ are $d/h \times d$ matrices. We denote the collection of all parameter matrices as $\vecW$. 

Similarly, rewrite the fixed head size attention layer as
\begin{equation*}
    g_\vecV (\vecX) = \vecV_o \cdot \text{FixedMultiHead}(\vecX) = \sum_{i=1}^h \vecV_o^i \vecV_v^i \vecX \cdot \sfmx \left[\nicefrac{(\vecV_k^i \vecX)^T (\vecV_q^i \vecX)}{\sqrt{d_p}} \right],
\end{equation*}
where $\vecV_o^i \in \reals^{d \times d_p}$, and $\vecV_v^i, \vecV_k^i, \vecV_q^i \in \reals^{d_p \times d}$. Let $\vecV$ be the collection of all these matrices.

The outline of the proof is basically a case analysis: we divide possible values of $\vecW$ into three categories, and show in each case that there exists a $\vecX$ such that $f_{\vecW}(\vecX) \neq g_{\vecV}(\vecX)$. Here are the three cases:
\begin{itemize}
\item \textbf{Case 1}: $\sum_{i=1}^h \vecW_o^i \vecW_v^i \neq \sum_{i=1}^h \vecV_o^i \vecV_v^i$.
\item \textbf{Case 2}: $\sum_{i=1}^h \vecW_o^i \vecW_v^i = \sum_{i=1}^h \vecV_o^i \vecV_v^i$, and there exists $i \in \{1, \dots, h\}$ such that $\vecU/\sqrt{d_p}-(\vecW_k^i)^T(\vecW_q^i)/\sqrt{d/h}$ is not skew-symmetric.
\item \textbf{Case 3}: $\sum_{i=1}^h \vecW_o^i \vecW_v^i = \sum_{i=1}^h \vecV_o^i \vecV_v^i$, and all $\vecU/\sqrt{d_p}-(\vecW_k^i)^T(\vecW_q^i)/\sqrt{d/h}$ are skew-symmetric.
\end{itemize}

\paragraph{Case 1.} In the first case, we can choose any $\vecv$ such that $(\sum_{i=1}^h \vecW_o^i \vecW_v^i - \sum_{i=1}^h \vecV_o^i \vecV_v^i) \vecv \neq \veczero$. Choose $\vecX = \vecv \vecone^T = \begin{bmatrix} \vecv & \vecv & \dots & \vecv \end{bmatrix}$. Then, note that for any column stochastic matrix $\vecP$, we have $\vecX \vecP = \vecX$. Therefore,
\begin{align*}
    &\sum_{i=1}^h \vecW_o^i \vecW_v^i \vecX \cdot \sfmx \left[ \nicefrac{(\vecW_k^i \vecX)^T (\vecW_q^i \vecX)}{\sqrt{d/h}}  \right] - \sum_{i=1}^h \vecV_o^i \vecV_v^i \vecX \cdot \sfmx \left[\nicefrac{(\vecV_k^i \vecX)^T (\vecV_q^i \vecX)}{\sqrt{d_p}} \right]\\
    = &\sum_{i=1}^h \vecW_o^i \vecW_v^i \vecX - \sum_{i=1}^h \vecV_o^i \vecV_v^i \vecX
    = (\sum_{i=1}^h \vecW_o^i \vecW_v^i - \sum_{i=1}^h \vecV_o^i \vecV_v^i) \vecv \vecone^T \neq \veczero.
\end{align*}

\paragraph{Case 2.} In cases where $\sum_{i=1}^h \vecW_o^i \vecW_v^i = \sum_{i=1}^h \vecV_o^i \vecV_v^i$, since $\sum_{i=1}^h \vecV_o^i \vecV_v^i$ is full rank by assumption and each $\vecW_o^i \vecW_v^i$ is at most rank $d/h$, it follows that all columns in $\vecW_o^i \in \reals^{d \times d/h}$ must be linearly independent. Therefore, for any $\vecv \neq \veczero$, $\{\vecW_o^i \vecW_v^i \vecv, i=1, \dots, h\}$ is a set of linearly independent vectors, because each $\vecW_o^i \vecW_v^i \vecv$ is a linear combination of $d/h$ column vectors of $\vecW_o^i$ that are linearly independent of other column vectors in $\vecW_o^j$, $j \neq i$.

Now consider any $\vecv \in \reals^d$, and $\vecX = \vecv \vece_1^T$, where $\vece_1 = (1, 0, \dots, 0) \in \reals^n$. Define $\phi(t) = \exp(t)/(\exp(t) + n-1)$. Then, we have
\begin{align*}
&
g_{\vecV} (\vecX) =
\sum_{i=1}^h \vecV_o^i \vecV_v^i \vecX \cdot \sfmx \left[ \nicefrac{\vecX^T \vecU \vecX}{\sqrt{d_p}} \right]
= \sum_{i=1}^h \vecV_o^i \vecV_v^i \vecX \cdot \sfmx 
\begin{bmatrix}
\frac{\vecv^T \vecU \vecv}{\sqrt{d_p}} & 0 & \dots & 0\\
0 & 0 & \dots & 0\\
\vdots & \vdots & \ddots & \vdots\\
0 & 0 & \dots & 0
\end{bmatrix}\\
=& \left( \sum_{i=1}^h \vecV_o^i \vecV_v^i \right)
\begin{bmatrix}
\phi \left (\frac{\vecv^T \vecU \vecv}{\sqrt{d_p}} \right) \vecv & \frac{\vecv}{n} & \dots & \frac{\vecv}{n}
\end{bmatrix}
= \left( \sum_{i=1}^h \vecW_o^i \vecW_v^i \right) 
\begin{bmatrix}
\phi \left (\frac{\vecv^T \vecU \vecv}{\sqrt{d_p}} \right) \vecv & \frac{\vecv}{n} & \dots & \frac{\vecv}{n}
\end{bmatrix}.
\end{align*}
Similarly, we can calculate
\begin{align*}
f_{\vecW} (\vecX) &=
    \sum_{i=1}^h \vecW_o^i \vecW_v^i \vecX \cdot \sfmx \left[ \nicefrac{(\vecW_k^i \vecX)^T (\vecW_q^i \vecX)}{\sqrt{d/h}}  \right] \\
    &=
    \sum_{i=1}^h \vecW_o^i \vecW_v^i 
    \begin{bmatrix}
    \phi \left (\frac{\vecv^T (\vecW_k^i)^T \vecW_q^i \vecv}{\sqrt{d/h}} \right) \vecv & \frac{\vecv}{n} & \dots & \frac{\vecv}{n}
    \end{bmatrix}.
\end{align*}
Notice that all the columns of $f_{\vecW} (\vecX)$ and $g_{\vecV} (\vecX)$, from the second columns to the last ones, are the same. We now compare the first columns:
\begin{align*}
    f_{\vecW} (\vecX)_{:, 1} - g_{\vecV} (\vecX)_{:, 1}
    = \sum_{i=1}^h \left ( \phi \left (\frac{\vecv^T (\vecW_k^i)^T \vecW_q^i \vecv}{\sqrt{d/h}} \right) - \phi \left (\frac{\vecv^T \vecU \vecv}{\sqrt{d_p}} \right) \right )\vecW_o^i \vecW_v^i \vecv.
\end{align*}
Recall that for any $\vecv \neq \veczero$, $\vecW_o^i \vecW_v^i \vecv$ are linearly independent, so $f_{\vecW} (\vecX)_{:, 1} - g_{\vecV} (\vecX)_{:, 1} = \veczero$ if and only if all $\phi \left (\frac{\vecv^T (\vecW_k^i)^T \vecW_q^i \vecv}{\sqrt{d/h}} \right) - \phi \left (\frac{\vecv^T \vecU \vecv}{\sqrt{d_p}} \right )$ are zero. However, since there exists $i \in \{1, \dots, h\}$ such that $\vecU/\sqrt{d_p}-(\vecW_k^i)^T(\vecW_q^i)/\sqrt{d/h}$ is not skew-symmetric, we can choose $\vecv$ to be one that satisfies $\frac{\vecv^T (\vecW_k^i)^T \vecW_q^i \vecv}{\sqrt{d/h}} \neq \frac{\vecv^T \vecU \vecv}{\sqrt{d_p}}$, hence making $\phi \left (\frac{\vecv^T (\vecW_k^i)^T \vecW_q^i \vecv}{\sqrt{d/h}} \right) - \phi \left (\frac{\vecv^T \vecU \vecv}{\sqrt{d_p}} \right ) \neq 0$, therefore $f_{\vecW} (\vecX)_{:, 1} - g_{\vecV} (\vecX)_{:, 1} \neq \veczero$.

\paragraph{Case 3.}
Now consider any $\vecX = \begin{bmatrix}\vecv_1 & \vecv_2 & \veczero & \dots &\veczero \end{bmatrix}$, where $\vecv_1$ and $\vecv_2$ will be chosen later.
Define $\phi_1(t_1, t_2) = \exp(t_1)/(\exp(t_1) + \exp(t_2) + n-2)$, $\phi_2(t_1, t_2) = \exp(t_2)/(\exp(t_1) + \exp(t_2) + n-2)$. Then, we have
\begin{align*}
&
g_{\vecV} (\vecX) 
= \sum_{i=1}^h \vecV_o^i \vecV_v^i \vecX \cdot \sfmx 
\begin{bmatrix}
\frac{\vecv_1^T \vecU \vecv_1}{\sqrt{d_p}} & \frac{\vecv_1^T \vecU \vecv_2}{\sqrt{d_p}} & 0 & \dots & 0\\
\frac{\vecv_2^T \vecU \vecv_1}{\sqrt{d_p}} & \frac{\vecv_2^T \vecU \vecv_2}{\sqrt{d_p}} & 0 & \dots & 0\\
0 & 0 & 0 & \dots & 0\\
\vdots & \vdots & \vdots & \ddots & \vdots\\
0 & 0 & 0 & \dots & 0
\end{bmatrix}.
\end{align*}
Therefore, the first column of $g_\vecV(\vecX)$ can be written as
\begin{align*}
g_{\vecV} (\vecX)_{:,1}
=
\left( \sum_{i=1}^h \vecW_o^i \vecW_v^i \right) 
\left [
\phi_1 \left (\frac{\vecv_1^T \vecU \vecv_1}{\sqrt{d_p}}, \frac{\vecv_2^T \vecU \vecv_1}{\sqrt{d_p}} \right) \vecv_1 + 
\phi_2 \left (\frac{\vecv_1^T \vecU \vecv_1}{\sqrt{d_p}}, \frac{\vecv_2^T \vecU \vecv_1}{\sqrt{d_p}} \right) \vecv_2
\right ].
\end{align*}
Similarly, the first column of $f_{\vecW}(\vecX)$ is
\begin{align*}
f_{\vecW} (\vecX)_{:,1}
=
\sum_{i=1}^h \vecW_o^i \vecW_v^i
\Bigg [ &
\phi_1 \left (\frac{\vecv_1^T (\vecW_k^i)^T \vecW_q^i \vecv_1}{\sqrt{d/h}}, \frac{\vecv_2^T (\vecW_k^i)^T \vecW_q^i \vecv_1}{\sqrt{d/h}} \right) \vecv_1 + \\
&
\phi_2 \left (\frac{\vecv_1^T (\vecW_k^i)^T \vecW_q^i \vecv_1}{\sqrt{d/h}}, \frac{\vecv_2^T (\vecW_k^i)^T \vecW_q^i \vecv_1}{\sqrt{d/h}} \right) \vecv_2
\Bigg ].
\end{align*}
Since $\vecU/\sqrt{d_p}-(\vecW_k^1)^T(\vecW_q^1)/\sqrt{d/h}$ is skew-symmetric by assumption, we have $\vecv_1^T \left (\frac{\vecU}{\sqrt{d_p}}-\frac{(\vecW_k^1)^T(\vecW_q^1)}{\sqrt{d/h}} \right )\vecv_1 = 0$ for all $\vecv_1$. Recall that $\vecU$ is rank-$d_p$ by assumption, so $\vecU/\sqrt{d_p}-(\vecW_k^1)^T(\vecW_q^1)/\sqrt{d/h}$ is at least rank $d_p - d/h \geq 1$, so we can choose any $\vecv_1$ such that $\left (\frac{\vecU}{\sqrt{d_p}}-\frac{(\vecW_k^1)^T(\vecW_q^1)}{\sqrt{d/h}} \right )\vecv_1 \neq \veczero$. 

If both $\frac{\vecU}{\sqrt{d_p}} \vecv_1$ and $\frac{(\vecW_k^1)^T(\vecW_q^1)}{\sqrt{d/h}} \vecv_1$ are nonzero,
We can always choose $\tilde \vecv_2$ such that $\tilde \vecv_2 ^T \left (\frac{\vecU}{\sqrt{d_p}}\right )\vecv_1 > 0$ and $\tilde \vecv_2 ^T \left (\frac{(\vecW_k^1)^T(\vecW_q^1)}{\sqrt{d/h}}\right )\vecv_1 < 0$. This means that if we choose $\vecv_2 = \alpha \tilde \vecv_2$ and scale $\alpha \rightarrow \infty$, 
\begin{align*}
    &\phi_1 \left (\frac{\vecv_1^T \vecU \vecv_1}{\sqrt{d_p}}, \frac{\vecv_2^T \vecU \vecv_1}{\sqrt{d_p}} \right) \rightarrow 0,
    ~~
    \phi_2 \left (\frac{\vecv_1^T \vecU \vecv_1}{\sqrt{d_p}}, \frac{\vecv_2^T \vecU \vecv_1}{\sqrt{d_p}} \right) \rightarrow 1,\\
    &\phi_1 \left (\frac{\vecv_1^T (\vecW_k^1)^T \vecW_q^1 \vecv_1}{\sqrt{d/h}}, \frac{\vecv_2^T (\vecW_k^1)^T \vecW_q^1 \vecv_1}{\sqrt{d/h}} \right) \rightarrow
    \frac{\exp(\vecv_1^T (\vecW_k^1)^T \vecW_q^1 \vecv_1/\sqrt{d/h})}
    {\exp(\vecv_1^T (\vecW_k^1)^T \vecW_q^1 \vecv_1/\sqrt{d/h})+n-2},\\
    &
    \phi_2 \left (\frac{\vecv_1^T (\vecW_k^1)^T \vecW_q^1 \vecv_1}{\sqrt{d/h}}, \frac{\vecv_2^T (\vecW_k^1)^T \vecW_q^1 \vecv_1}{\sqrt{d/h}} \right) \rightarrow 0.
\end{align*}

Then, consider the difference $f_{\vecW} (\vecX)_{:,1} - g_{\vecV} (\vecX)_{:,1}$. Recall that for any $\vecv$, $\vecW_o^1 \vecW_v^1 \vecv$ is independent of $\{\vecW_o^i \vecW_v^i \vecv, i \neq 1 \}$. This means that, to show $f_{\vecW} (\vecX)_{:,1} - g_{\vecV} (\vecX)_{:,1} \neq \veczero$, it suffices to show that
\begin{align*}
&\Bigg [
\phi_1 \left (\frac{\vecv_1^T (\vecW_k^1)^T \vecW_q^1 \vecv_1}{\sqrt{d/h}}, \frac{\vecv_2^T (\vecW_k^1)^T \vecW_q^1 \vecv_1}{\sqrt{d/h}} \right) - 
\phi_1 \left (\frac{\vecv_1^T \vecU \vecv_1}{\sqrt{d_p}}, \frac{\vecv_2^T \vecU \vecv_1}{\sqrt{d_p}} \right)
\Bigg ]
\vecW_o^1 \vecW_v^1 \vecv_1 
 + \\
&\Bigg [
\phi_2 \left (\frac{\vecv_1^T (\vecW_k^1)^T \vecW_q^1 \vecv_1}{\sqrt{d/h}}, \frac{\vecv_2^T (\vecW_k^1)^T \vecW_q^1 \vecv_1}{\sqrt{d/h}} \right) - 
\phi_2 \left (\frac{\vecv_1^T \vecU \vecv_1}{\sqrt{d_p}}, \frac{\vecv_2^T \vecU \vecv_1}{\sqrt{d_p}} \right)
\Bigg ]
\vecW_o^1 \vecW_v^1 \vecv_2
\neq \veczero.
\end{align*}
If we scale $\vecv_2 = \alpha \tilde \vecv_2$ with large enough $\alpha$, the second term will dominate the first term and the first term will never be able to cancel the second one. Thus, by choosing large enough $\alpha > 0$, we can make sure that the sum is nonzero.

Even in case where one of $\frac{\vecU}{\sqrt{d_p}} \vecv_1$ and $\frac{(\vecW_k^1)^T(\vecW_q^1)}{\sqrt{d/h}} \vecv_1$ is zero (say $\frac{(\vecW_k^1)^T(\vecW_q^1)}{\sqrt{d/h}} \vecv_1 = \veczero$), we can choose $\tilde \vecv_2 = \frac{\vecU}{\sqrt{d_p}} \vecv_1$ and use a similar scaling argument. By choosing large enough $\alpha > 0$ and $\vecv_2 = \alpha \tilde \vecv_2$, one can show that the difference $f_{\vecW} (\vecX)_{:,1} - g_{\vecV} (\vecX)_{:,1}$ is nonzero.
\end{proof}
\section{Experimental settings}\label{sec:appx_parameters}

For our experiments with the language modeling (LM1B dataset), we train 6 layer Transformer models. We use a batch size of 4096 and train for 250k steps. We use a learning rate of 0.1 with a linear warm up for the first 10k steps. We decay the learning rate with the square root of the number of steps. We train the baseline models, with the prevalent head size heuristic, with the embedding dimension varying from 256 to 512. We fix the width of the feed forward layer in the Transformer to be 1024. In addition, we use weight decay of 0.01 and dropout with probability of 0.1 on all the layers.

For our experiments with BERT, we follow the same experimental settings as in \citep{devlin2018bert}. We present the  key  details here and refer the reader to \citep{devlin2018bert}. We train with a batch size of 1024 for 450k steps with inputs of sequence length $n$ = 128 followed by 50k steps with inputs of sequence length 512. In contrast the BERT paper uses a batch size of 512, and does the pre-training for 900K steps with 128 sequence length inputs and 100k steps with 512 sequence length inputs. We train using ADAM with a learning rate of 1e-4, and a linear warmup and decay schedule as in BERT. We use 5k warmup steps for the first stage, and a re-warmup of 3k steps for the second stage \citep{you2019reducing}. Again, we use weight decay of 0.01 and dropout with probability of 0.1 on all the layers.

For the language modeling task, training is performed on 4 TPUv2 chips for a couple of hours. For BERT models training is performed on 16 TPUv3 chips in the first stage and 64 TPUv3 chips for the second stage. Pre-training with this configuration takes between 2 to 3 days. We did not attempt to find the optimal hyper-parameters for the fixed head size architecture, and use the same hyper-parameters as used for training the BERT models.

\section{Additional experimental results}\label{sec:appx_experiments}

\begin{figure}[ht]
\centering
	\includegraphics[width=.6\linewidth]{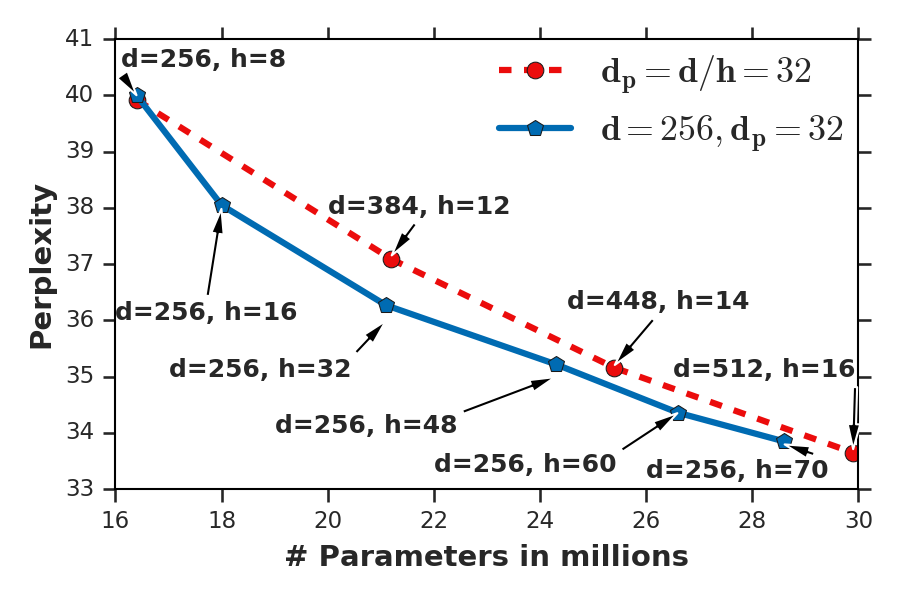}
	\caption{Performance of the Transformers trained with the prevalent head size heuristic (baseline) compared with the fixed head size ($d_p$) models for a language modeling task (LM1B) on the test set. Unlike Fig.\ref{fig:lm1b}, we vary both the embedding size and the number of heads of the baseline models to keep their head size fixed to 32. We train the fixed head size models with a fixed embedding size of 256 and a head size of 32, and vary the number of heads from 4 to 70, while matching the number of parameters. The plot again clearly indicates the advantage of the fixed head size models. The main issue with the baseline models is that fixing the head size to 32 forces the number of heads to be small when the  embedding size is small. Reducing the number of heads below certain threshold hurts the performance of the Transformer.}
	\label{fig:varying_scale_base}
\end{figure}

\begin{table}[!t]
\centering
\begin{tabular}{|c| c c c c |} 
 \hline
 \# heads & 8 & 12 & 16 & 20 \\ [0.5ex] 
 \hline
 \# params & 214M & 252M & 290M & 327M \\[0.5ex] 
 \hline
 SQuAD - F1 & 90.35$\pm$0.14 & 90.48$\pm$0.09 & 90.92$\pm$0.14 & 90.89$\pm$0.08 \\ [0.5ex]
 SQuAD - EM & 83.37$\pm$0.12 & 83.67$\pm$0.03 & 84.16$\pm$0.35 & 84.29$\pm$0.16 \\[0.5ex]
 MNLI  & 84.4$\pm$0.2 & 84.4$\pm$0.2 & 84.7$\pm$0.1 & 85.1$\pm$0.4 \\ [1ex] 
 \hline
   \multicolumn{5}{c}{(A) Increasing number of heads} \\
\end{tabular}
\caption{(A): 24 layer Transformer trained with a fixed head size of 128 and an embedding size of 768 shows an improvement in the accuracy with the increasing number of heads.}
\label{table:appx_3}
\end{table}

\end{document}